\patchcmd{\ps@headings}{\rlap{\thepage}}{}{}{}
\patchcmd{\ps@headings}{\llap{\thepage}}{}{}{}
\begin{document}
\frontmatter          % for the preliminaries
\pagestyle{headings}  % switches on printing of running heads
%\addtocmark{Hamiltonian Mechanics} % additional mark in the TOC
%
\mainmatter              % start of the contributions
\title{Revisiting the Minimum Constraint Removal Problem in Mobile Robotics}
%
%\titlerunning{Hamiltonian Mechanics}  % abbreviated title (for running head)
%                                     also used for the TOC unless
%                                     \toctitle is used
%
\author{Antony Thomas \and Fulvio Mastrogiovanni \and Marco Baglietto}
\authorrunning{Antony Thomas and Fulvio Mastrogiovanni and Marco Baglietto} % abbreviated author list (for running head)
%
%%%% list of authors for the TOC (use if author list has to be modified)

%
\institute{Department of Informatics, Bioengineering, Robotics, and Systems Engineering, University of Genoa, Via All'Opera Pia 13, 16145, Genoa, Italy.\\
\email{antony.thomas@dibris.unige.it, fulvio.mastrogiovanni@unige.it, marco.baglietto@unige.it}}

\maketitle              % typeset the title of the contribution

\begin{abstract}
The \textit{minimum constraint removal problem} seeks to find the minimum number of constraints, i.e., obstacles, that need to be removed to connect a start to a goal location with a collision-free path. 
This problem is NP-hard and has been studied in robotics, wireless sensing, and computational geometry. 
This work contributes to the existing literature by presenting and discussing two results. 
The first result shows that the minimum constraint removal is NP-hard for simply connected obstacles where each obstacle intersects a constant number of other obstacles. 
The second result demonstrates that for $n$ simply connected obstacles in the plane, instances of the minimum constraint removal problem with minimum removable obstacles lower than $(n+1)/3$ can be solved in polynomial time. 
This result is also empirically validated using several instances of randomly sampled axis-parallel rectangles.
\end{abstract}

\section{Introduction}
The classical robot motion planning problem seeks to find a collision-free path from a start location to a goal location in the presence of fixed obstacles. 
However, in many situations, a collision-free path may not exist, but obstacles could be moved. 
For example, in a typical in-home setting, obstacles could correspond to chairs that could be re-positioned to allow for a feasible path, or they could be objects located inside a kitchen shelf that could be rearranged to reach a desired object. 
If no collision-free path exists, and if we allow for paths to intersect (collide with) obstacles, then one may naturally ask what is the minimum number of obstacles that are to be removed so that there exists a collision-free path from start to goal.
This problem, introduced by Hauser~\cite{hauser2014IJRR} is called the Minimum Constraint Removal (MCR) problem, and it is related to finding the fewest constraints (obstacles) that must be removed from the environment to connect a start and a goal location with a free path. 

Hauser showed that MCR is NP-hard by reduction from the SET-COVER problem. 
Erickson and Lavalle~\cite{erickson2013AAAI} performed a reduction from the maximum satisfiability problem for binary Horn clauses, proving that MCR is NP-hard for convex polygonal obstacles. 
The problem is also NP-hard when the obstacles are line-segments~\cite{alt2011EuroCG,yang2012PHD}. 
As a result, approximate solutions are often sought after~\cite{krontiris2017AR}. 
Bandyapadhyay \textit{et al.}~\cite{bandyapadhyay2020CG} showed that there exists an $O(\sqrt{n})$ approximation for rectilinear polygons and disks (with $n$ denoting the number of polygons or disks). 
MCR as translatd to computing the minimum color in an edge-colored graph is shown to have an $O(n^{2/3})$ polynomial time approximate algorithmic solution~\cite{kumar2019AEA}, where $n$ denotes the vertices on the translated graph.

Minimum constraint removal has also been studied under the name \textit{barrier resilience} by those who work in sensor networks ~\cite{kumar2005ICMCN,bereg2009ALGOSENSORS,chan2014TCS}. 
This problem considers an arrangement of different regions in a 2D plane, where each region represents the \textit{detection region} of a sensor. 
Given a start and a goal point, the resilience of the network is then measured by the minimum number of distinct sensor detection regions intersected, that is the minimum number of sensors that need to fail to create a sensor-avoiding path. 
Chan and Kirkpatrick~\cite{chan2014TCS} reduce the barrier resilience problem to the minimum-color path problem~\cite{yuan2005INFOCOM}, and compute a $1.5$-approximation for unit disks (sensor regions) which is an improvement over the $2$-approximation developed in~\cite{bereg2009ALGOSENSORS}. 
The problem is also shown to be NP-hard even for axis-aligned rectangles with an aspect ratio close to one~\cite{korman2018CG}.

In this paper, we demonstrate the following:
\begin{enumerate}
\item MCR is NP-hard for simply connected obstacles, where each obstacle intersects a constant number of other obstacles. Such a claim addresses an open question presented in~\cite{hauser2014IJRR} --- Is MCR in P on problems in which each obstacle is connected and intersects no more than $k$ obstacles, with $k$ held constant? 
\item For $n$ simply connected obstacles in the plane, instances of MCR with minimum removable obstacles lower than $(n+1)/3$ can be solved in polynomial time.
\end{enumerate}

The result in (1) also augments the findings presented in~\cite{eiben2018AAAI}, wherein MCR with each obstacle intersecting a constant number of other obstacles is shown to run in $2^{O(\sqrt{N})}$, where $N$ is the number of regions formed by the intersections of the $n$ obstacles ($N= O(n^2)$). 
The existence of a sub-exponential time (in $N$) algorithm is proved by considering line segments as obstacles and performing a reduction from the MAXIMUM NEGATIVE 2-SATISFIABILITY problem which is NP-hard. 
In contrast, we provide general proof for simply connected obstacles by considering different paths induced by the intersecting obstacles. 

If it is known beforehand that the optimal solution to MCR, which we refer to as $S^{\star}$, is lower than $(n+1)/3$, that is, $|S^{\star}|< (n+1)/3$, the result in (2) implies that an exact algorithm could be accorded more priority than an approximate one. 
Proof and further discussions on this result are provided in Section~\ref{sec:fixed}.
\section{Related Work}
Minimum constraint removal is closely related to different areas of research within the robotics literature. MCR finds the minimal set of obstacles that render motion planning infeasible. Disconnection proving or infeasibility proofs in motion planning~\cite{basch2001ICRA,zhang2008WAFR,li2021RSS} that guarantee that no path can be found is thus a related class of problem. Another closely related line of work is the Minimum Constraint Displacement (MCD) motion planning~\cite{hauser2013RSS,thomas2022IAS,thomas2023ICRA}. While MCR identifies the minimum number of obstacles that must be removed to yield a feasible path, MCD computes the minimum amount by which obstacles must be displaced to enable a feasible motion plan. 

In Navigation Among Movable Obstacles (NAMO)~\cite{stilman2005IJHR,nieuwenhuisen2008WAFR,van2009WAFR} class of problems, a robot manipulates different objects in the plane to create space for its motion. However, such approaches could move many objects (not necessarily the minimal set) and the corresponding  displacements may be arbitrary. Another related area is the field of task and motion planning~\cite{kaelbling2013IJRR,srivastava2014ICRA,dantam2016RSS,garrett2018IJRR,thomas2021RAS}. In such domains, additional symbolic actions are generated to ensure motion planning feasibility of the required task. Manipulation among clutter or rearrangement planning in clutter~\cite{stilman2007ICRA,dogar2011RSS,krontiris2015RSS,karami2021AIIA} also pose related challenges since this requires moving aside a set of obstacle (obstacles that hinder the task) or placing them at selected locations. 
\section{Hardness Result for Fixed Obstacle Intersections}
\label{sec:problem}
In this Section, we prove that instances of MCR where each obstacle intersects up to a fixed number of other obstacles are NP-hard. 

It is well known that MCR problems can be converted to a graph search problem~\cite{erickson2013AAAI,hauser2014IJRR,gorbenko2015AIP,eiben2018ICALP}. 
To attain this transformation, the configuration space is partitioned along the obstacle boundaries, and a graph is constructed such that the vertices are the faces of the obstacle arrangements and the edges connect neighboring faces. 
The start $s$ and goal $g$ locations are augmented by adding the appropriate edges to obtain a graph $G=(V, E)$, where $V$ is the set of vertices, and $E$ is a collection of pairs of vertices $(v, v')$, where $v \neq v'$, or edges. 
An example is given in Fig.~\ref{fig:C}. 
It is noteworthy that when the obstacles do not intersect each other, then each vertex $v \in V$ corresponds to a single obstacle, and any path passing through $k$ such vertices implies that the $k$ corresponding obstacles must be removed to obtain a feasible path\footnote{MCR with disjoint and simply connected obstacles is in P~\cite{hauser2014IJRR}.}. 
When obstacles are allowed to intersect, each vertex could correspond to more than one obstacle. 
For each vertex $v$, we thus define the intersection function $I(v)$ as the set of intersecting obstacles that are represented by the vertex, that is, a path $P_G$ induced on $G$ that passes through vertex $v$ intersects all the obstacles in $I(v)$. 
Consequently, for a path $P$ in the configuration space from $s$ to $g$ to be feasible, the union of intersection functions for all vertices $v$ lying on $P_G$ need to be computed. 
This gives the set of obstacles that should be moved to make the path $P$ feasible, and it will be called the removable set $R_P$ for the path $P$. 
Similarly, the removable set of a path $P$ from $s$ to a generic vertex $v$ can be denoted as $R_P(v)$\footnote{In the following, we will omit the subscript $P$ because the reference to a specific path $P$ will be implicit.}.   

\begin{figure}[t]
\subfloat[]{\includegraphics[scale=0.55]{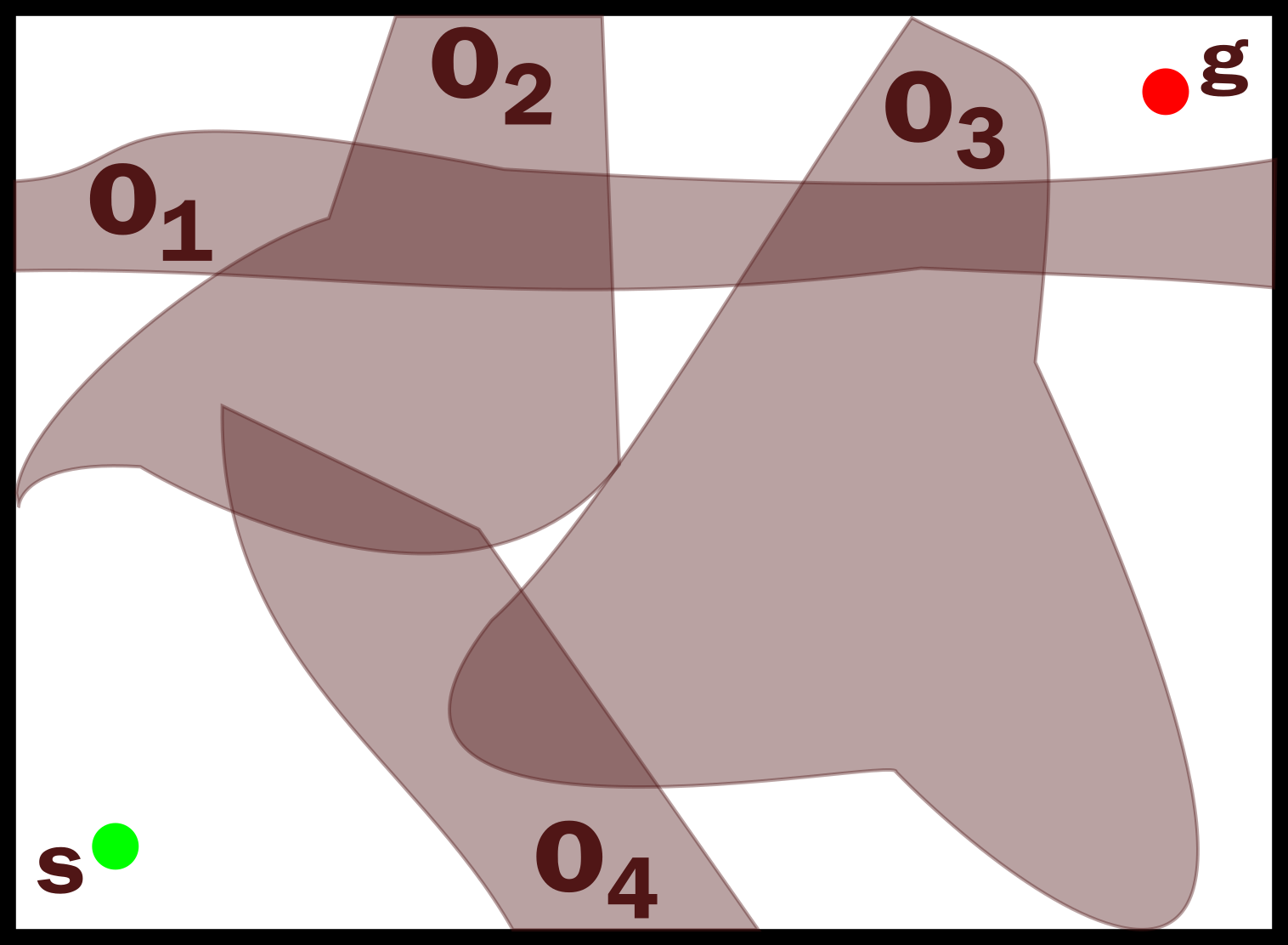}\label{fig:C1}}\hfill
\subfloat[]{\resizebox{0.39\textwidth}{!}{\begin{tikzpicture}[node distance={12mm}, thick, main/.style = {draw, circle}] 
\node[main] (1) [color=green] {$s$}; 
\node[main] (2) [above of=1] {$1$}; 
\node[main] (3) [below of=2, xshift=1cm] {$2$}; 
\node[main] (4) [below right of=3, yshift=-0.3cm, xshift=-0.7cm] {$2,4$}; 
\node[main] (5) [below of=3, yshift=-0.5cm, xshift=-1cm] {$4$}; 
\node[main] (6) [above right of=3, xshift=-0.6cm] {$1,2$}; 
\node[main] (7) [above of=6, xshift=1cm] {$2$}; 
\node[main] (8) [below right of=6] {$1$}; 
\node[main] (9) [below of=8, yshift=-0.9cm, xshift=0.5cm] {$3$};
\node[main] (10) [below left of=9, yshift=0.3cm] {$3,4$};
\node[main] (11) [below right of=7] {$3$};
\node[main] (12) [below of=11] {$1,3$};
\node[main] (13) [below right of=12] {$1$};
\node[main] (14) [color=red, above right of=13] {$g$};
\draw (1) -- (3); 
\draw (1) -- (5); 
\draw (2) -- (7);
\draw (2) -- (3);
\draw (2) -- (6);
\draw (3) -- (4);
\draw (3) -- (6);
\draw (3) -- (5);
\draw (3) -- (8);
\draw (3) -- (9);
\draw (3) -- (6);
\draw (5) -- (9);
\draw (5) -- (10);
\draw (10) -- (9);
\draw (9) -- (8);
\draw (9) -- (12);
\draw (9) -- (13);
\draw (8) -- (6);
\draw (8) -- (12);
\draw (12) -- (13);
\draw (6) -- (7);
\draw (7) -- (11);
\draw (11) -- (12);
\draw (11) -- (14);
\draw (14) -- (13);
\draw (5) -- (4);
\draw (7) -- (8);
\draw (11) -- (8);
\end{tikzpicture}}\label{fig:C2}}\hfill
\caption{
(a) An instance of MCR with 4 obstacles. 
Start and goal locations are shown in green and red, respectively. 
(b) The MCR problem transformed into a graph. 
Vertices represent obstacle sets (obstacle numbers inside the nodes) that form 2D connected regions based on the partition along obstacle boundaries.}
\label{fig:C}
\end{figure}
 
Given a vertex $v$, a path $P$, and the corresponding path on a graph $P_G$, MCR on a graph can be solved exactly by expanding the graph in the increasing order of the cardinality $|R(v)|$ of the removable set. 
To this end, each state is encoded as a pair $(v, R(v))$, representing the vertex and the corresponding removable set. 
This approach is complete and optimal~\cite{hauser2014IJRR}. 
When the obstacles are disjoint and simply connected, the exact search corresponds to finding the shortest path on a graph with a unit cost associated with each vertex. 
In this case, we can compute the minimum $|R(v)|$ path by employing the Dijkstra's algorithm. 
However, when the obstacles are overlapping, paths could \textit{enter} as well as \textit{exit} an obstacle more than once. 
This makes the removable set $R(v)$ path-dependent since an obstacle could potentially belong to the intersection function $I(v)$ of all the vertices, causing NP-hardness. 

Let us now consider the case in which each obstacle is allowed to intersect up to $k$ other obstacles so that we have $k+1$ intersecting obstacles.
Let the graph search be at a vertex $v'$ with a corresponding removable set $R(v')$. 
For any $v$ adjacent to $v'$ with $c = |I(v)|$, $1 \leq c\leq k+1$, the removable set $R(v) = R(v') \cup I(v)$. 
Clearly, it holds that $|R(v')|\leq |R(v)| \leq |R(v')| + c$. 
This is due to the fact that $R(v)$ depends on the path history, that is the vertices encountered while reaching $v$. 
In order to make this more explicit, let us assume $I(v) = \{o_1, \ldots, o_k\}$. 
If $R(v') \cap I(v) = \{\emptyset\}$, the path from start to $v'$ has not encountered any of the obstacles contributing to $v$, and therefore $|R(v)| = |R(v')| + k + 1$. 
However, if $R(v') \cap I(v) \neq \{\emptyset\}$, then the path from start to $v'$ did encounter some (or all) of the obstacles in $I(v)$, and it holds that $|R(v)| < |R(v')| + k + 1$. 
This can be easily visualized in Fig.~\ref{fig:C2}. 
If we let $I(v) = \{o_1, o_3\}$, and if $I(v) \subseteq R(v')$, then $R(v) = R(v')$. 
If either $o_1 \in R(v')$ or $o_3 \in R(v')$, we then have $|R(v)| = |R(v')| + 1$. 
Finally, if neither $o_1 \not\in R(v')$ nor $o_3 \not\in R(v')$, we have that $|R(v)| = |R(v')| + 2$.  

As pointed out before, MCR can be solved exactly by expanding the corresponding graph in the increasing order of the cardinalities $|R(v)|$. 
However, as argued above, $R(v)$ is path-dependent. 
This means that at each node $v$, $|R(v) \backslash R(v')|$ could take different values depending on whether obstacles in $I(v)$ previously appeared on the path till reaching $v$. 
In other words, the state $(v, R(v))$ could be reached by many paths.
\begin{lemma}
If each obstacle is simply connected and intersects no more than $k$ obstacles, a vertex $v$ with $|I(v)| = k + 1$ and a removable set $R(v)$ could possibly be reached by $2^{k+1}$ different paths. 
\label{lemma1}
\end{lemma}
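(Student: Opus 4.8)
The plan is to classify every path that arrives at the state $(v, R(v))$ according to \emph{which} obstacles of $I(v)$ have already been accumulated into the removable set \emph{before} the final edge into $v$ is traversed. Write $I(v) = \{o_1, \ldots, o_{k+1}\}$, which has exactly $k+1$ elements by hypothesis. If $v'$ is the predecessor of $v$ along such a path, the update rule established above gives $R(v) = R(v') \cup I(v)$, so in particular $I(v) \subseteq R(v)$ for \emph{every} path reaching $v$. The point to exploit is that the target set $R(v)$ therefore carries no information about \emph{when} the individual obstacles of $I(v)$ were first encountered; that information survives only in the path history, and it is precisely this history that will index the distinct paths.

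First I would formalize the history invariant $T := R(v') \cap I(v)$, the obstacles of $I(v)$ picked up strictly before $v$. Solving the update rule yields $R(v') = (R(v) \setminus I(v)) \cup T$, so distinct values of $T$ force distinct predecessor removable sets $R(v')$, hence distinct predecessor states $(v', R(v'))$ and therefore genuinely distinct paths — even though they all collapse onto the \emph{same} target state $(v, R(v))$. Because $T$ can be any subset of $I(v)$ and nothing more, the number of history classes is at most $2^{|I(v)|} = 2^{k+1}$, which establishes the counting (upper-bound) half of the statement.

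The main obstacle is the word \emph{possibly}: I must exhibit a configuration in which all $2^{k+1}$ classes are simultaneously realizable by valid paths, not merely count them in the abstract. To this end I would construct an arrangement of $k+1$ mutually overlapping simply connected obstacles whose common intersection is the region corresponding to $v$, and attach to each obstacle $o_i$ an optional ``detour'' region that a path may traverse before entering the common region. Choosing independently, for each $i$, whether to route through the detour of $o_i$ realizes every subset $T \subseteq I(v)$ as the set of pre-encountered obstacles, and one verifies that each such route reaches $v$ with the identical removable set $R(v)$. The simply-connectedness assumption is what guarantees that these detours and the common region are well defined and that repeated entry does not spuriously change which obstacles of $I(v)$ are counted, so that the $2^{k+1}$ routes are indeed distinct paths converging on the single state $(v, R(v))$. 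Combining the upper bound with this realizability argument gives the claim.
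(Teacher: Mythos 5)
Your proposal is correct and follows essentially the same route as the paper: the paper's case analysis---counting, for each $i$, the $\binom{k+1}{i}$ ways in which exactly $i$ elements of $I(v)$ can already lie in $R(v')$, and summing $\sum_{i=0}^{k+1}\binom{k+1}{i} = 2^{k+1}$---is precisely your classification of paths by the subset $T = R(v') \cap I(v)$, merely stratified by cardinality. Your two additions, the injectivity argument that distinct $T$ force distinct predecessor states and the detour gadget showing all $2^{k+1}$ classes are simultaneously realizable, go beyond the paper's proof, which (consistent with the lemma's ``could possibly be reached'' phrasing) only counts the combinatorial possibilities.
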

\begin{proof}
Let us assume that the graph search is currently at a vertex $v'$ with a removable set $R(v')$. 
For any $v$ adjacent to $v'$ with $I(v) = k + 1$, it can be inferred that
\begin{itemize}
\item if any one element of $I(v)$ belongs to $R(v')$, then $|R(v)| = |R(v')| + k$, which is possible in $\binom{k+1}{1}$ ways;
\item if any two elements of $I(v)$ belong to $R(v')$, then $|R(v)| = |R(v')| + k - 1$, which is possible in $\binom{k+1}{2}$ ways;\\
{$\vdots$}
\vspace{0.2cm}
\item if any $k$ elements of $I(v)$ belong to $R(v')$, then $|R(v)| = |R(v')| + 1$, which is possible in $\binom{k+1}{k}$ ways.
\end{itemize}
Finally, it holds that $|R(v)| = |R(v')| + k + 1$ if $I(v) \nsubseteq R(v')$, and $R(v) = R(v')$ if $I(v) \subseteq R(v')$, which could be mapped to $\binom{k+1}{0}$ and $\binom{k+1}{k+1}$, respectively. 

If we add all the quantities above, we obtain the different possible paths to arrive at $(v, R(v))$. 
An exact expression can be obtained using the binomial sum $\sum_{i=0}^{k+1}\binom{k+1}{i} = 2^{k+1}$. 
Therefore, the state $(v, R(v))$ could potentially be reached via $2^{k+1}$ different paths.
\end{proof} 

\noindent We will now prove that MCR with simply connected obstacles with no more than $k+1$ obstacle intersections cannot be solved in polynomial time.
\begin{theorem}
MCR with simply connected obstacles such that each obstacle intersects no more than $k$ obstacles, with $k$ held constant, cannot be solved in polynomial time.
\label{theorem1}
\end{theorem}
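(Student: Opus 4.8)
The plan is to establish the result by a polynomial-time reduction from a known NP-hard problem, so that the phrase ``cannot be solved in polynomial time'' follows under the standard assumption $\mathrm{P} \neq \mathrm{NP}$. A natural source is $3$-SAT (or a $2$-SAT variant such as the MAXIMUM NEGATIVE $2$-SATISFIABILITY used in~\cite{eiben2018AAAI}), because its variable/clause structure maps cleanly onto the path-versus-removable-set trade-off that Lemma~\ref{lemma1} already exposes. Given a formula $\varphi$ with variables $x_1, \ldots, x_m$ and clauses $C_1, \ldots, C_r$, I would construct an MCR instance whose $s$--$g$ paths correspond to truth assignments and whose minimum removable set falls below a prescribed threshold exactly when $\varphi$ is satisfiable. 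The graph transformation of Section~\ref{sec:problem} then lets me reason entirely about paths $P_G$ and their accumulated removable sets $R(v)$.

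Concretely, I would use three families of gadgets, all built from simply connected obstacles. A \emph{variable gadget} forces each $s$--$g$ path to make a binary choice, routing it through one of two corridors corresponding to $x_i = \mathrm{true}$ or $x_i = \mathrm{false}$; the two corridors are guarded by obstacles so that exactly one value is committed per variable. A \emph{clause gadget} places obstacles that a path can avoid removing precisely when the chosen assignment satisfies the clause, so that each unsatisfied clause contributes a fixed increment to $|R(v)|$. Finally, \emph{wire gadgets} propagate a variable's committed value to the clauses in which it occurs. The correctness direction ``small removable set iff good assignment'' is then a matter of verifying that a minimum-cardinality $R_P$ is realized exactly by a consistent, cost-optimal assignment, which is the routine part of such reductions.

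The hard part, and the whole point of the theorem, is enforcing the degree bound: \emph{every} obstacle must intersect at most $k$ others for a constant $k$, even though a variable may appear in arbitrarily many clauses. A single obstacle encoding $x_i$ would therefore have unbounded intersection degree, so I cannot use one obstacle per variable. Instead I would split each variable and each wire into a \emph{chain} of constant-degree pieces that overlap only consecutively, and exploit precisely the phenomenon quantified in Lemma~\ref{lemma1}: because a path may re-enter obstacle regions and because $R(v)$ is path-dependent, consistency of a variable across all its occurrences can be enforced through the \emph{accumulated} removable set of a chain rather than through one high-degree obstacle. Each gadget junction---corridor crossings, wire fan-outs, and clause checks---must be realized geometrically with only $O(1)$ pairwise overlaps, which is the delicate bookkeeping step: I would verify locally that at most $k$ obstacles meet at any crossing, and globally that the shared obstacles along a chain are counted once in $R_P$ exactly when the path is consistent. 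Establishing that this geometric layout is always achievable for a suitable fixed $k$ while preserving the cost correspondence with $\varphi$ is where I expect the main difficulty to lie; once the gadgets are placed with bounded degree and the minimum removable set is shown to track the satisfiability of $\varphi$, NP-hardness---and hence the impossibility of a polynomial-time algorithm unless $\mathrm{P}=\mathrm{NP}$---follows immediately.
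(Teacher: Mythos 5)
Your proposal takes a genuinely different route from the paper, and as written it contains a genuine gap. First, the comparison: the paper's proof of Theorem~\ref{theorem1} is not a reduction at all. It is a counting argument on the exact best-first search of Section~\ref{sec:problem}: there are $2^n$ possible removable sets, so each vertex can appear in up to $2^n$ search states, and by Lemma~\ref{lemma1} each state $(v, R(v))$ can be reached along up to $2^{k+1}$ distinct paths, so the exact search generates $O(|E|2^{n+k+1})$ states in the worst case; the theorem is then read off from this bound. Your plan --- a gadget reduction from $3$-SAT or MAXIMUM NEGATIVE $2$-SAT, in the spirit of~\cite{eiben2018AAAI} --- is the standard complexity-theoretic route, and the two arguments would in fact establish different things: yours, if completed, yields NP-hardness and hence impossibility of a polynomial-time algorithm \emph{conditional on} $\mathrm{P}\neq\mathrm{NP}$, whereas the paper's argument bounds the state space generated by one particular algorithm rather than ruling out all polynomial-time algorithms.

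The gap is that everything which makes the theorem non-trivial is deferred rather than proved. The variable, clause, and wire gadgets are described only functionally; the geometric construction realizing them with simply connected obstacles, each intersecting at most a constant number $k$ of others, is precisely the content of the theorem, and you explicitly leave it open (``where I expect the main difficulty to lie''). The chain-splitting idea is exactly where such reductions tend to break: once a variable occurring in many clauses is represented by a chain of constant-degree pieces, an $s$--$g$ path can ``cheat'' by paying to cross the chain at a single link instead of committing to one consistent truth value along its whole length, so you must prove that every inconsistent path exceeds the satisfiability threshold. Enforcing that consistency pressure is in direct tension with the degree bound you are required to respect (strong coupling between occurrences of a variable is what high-degree obstacles would provide, and the hypothesis forbids them), and you give no construction or cost accounting that resolves this tension. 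Until the gadgets are placed explicitly, the threshold is fixed, and the ``small removable set iff satisfying assignment'' equivalence is verified under the degree constraint, what you have is a proof plan, not a proof.
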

\begin{proof}
Let there be $n$ \textit{movable} obstacles such that each obstacle is simply connected and intersects no more than $k$ obstacles. 
Since there are $2^n$ possible subsets of obstacles, each vertex could appear in $2^n$ search states. 
From Lemma~\ref{lemma1}, we know that a vertex $v$ with a given subset of obstacles (removable set $R(v)$) could be reached via $2^{k+1}$ possible paths.
Therefore, each vertex could potentially appear in $2^{n+k+1}$ search states. 
Consequently, in the worst case, the exact search generates $O(|E|2^{n+k+1})$ search states. 
Therefore, MCR with simply connected obstacles such that each obstacle intersects no more than $k$ obstacles cannot be solved in polynomial time.
\end{proof}
\section{A Sub-class in P}
\label{sec:fixed}
\subsection{Considered Sub-classes}

Though exact MCR via graph search is worst-case exponential time, one may ask if there exist sub-classes of MCR problems that are solvable in polynomial time. 
In this Section, we study one such sub-class. 

For an instance of MCR, let us consider the minimum set of obstacles $S^{\star}$ that must be removed to obtain a feasible path, that is the optimal solution. 
As seen before, the NP-hardness of MCR is caused by overlapping obstacles since they make $R(v)$ path-dependent\footnote{Exact MCR using best-first search for non-overlapping and simply connected obstacles could still generate $O(|E|2^n)$ states. 
However, for such sub-classes, a greedy search (running in $O(|E|n)$) generates an optimal solution~\cite{hauser2014IJRR}.}. 
We have also seen that the worst-case exponential complexity is attributed to the different possible combinations of obstacle sets. 
Since the exact search proceeds in the order of increasing cardinality $|R(v)|$, a natural question arises: \textit{does there exist a fixed $\alpha \leq n$ such that for all instances of MCR with $|S^{\star}| < \alpha$, an optimal solution can be found in polynomial time?}
\begin{lemma}
The binomial sum of the first $k$ terms has an upper bound given by
\begin{equation}
\sum_{i=0}^{k}\binom{n}{i} \leq 2 \binom{n}{k}.
\end{equation}
\noindent whenever $k< \frac{n+1}{3}$.
\label{lemma2}
\end{lemma}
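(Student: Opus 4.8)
The plan is to control the partial sum by comparing every term to the last and largest term $\binom{n}{k}$, exploiting the fact that under the hypothesis $k < (n+1)/3$ the binomial coefficients decay at least geometrically as the lower index decreases. First I would record the standard ratio of consecutive coefficients,
\[
\frac{\binom{n}{i-1}}{\binom{n}{i}} = \frac{i}{n-i+1},
\]
and observe that the map $i \mapsto i/(n-i+1)$ is strictly increasing, so over the range $1 \le i \le k$ it is maximized at $i = k$.

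Next I would feed in the hypothesis. Rearranging $k < (n+1)/3$ gives $3k < n+1$, i.e.\ $2k < n-k+1$, which is exactly the statement $k/(n-k+1) < 1/2$. By the monotonicity just noted, this forces $\binom{n}{i-1} < \tfrac12\binom{n}{i}$ for all $1 \le i \le k$, and telescoping yields $\binom{n}{k-j} \le 2^{-j}\binom{n}{k}$ for every $0 \le j \le k$ (an equality at $j=0$, strict thereafter). Reindexing by $j = k-i$ and substituting then gives
\[
\sum_{i=0}^{k}\binom{n}{i} = \sum_{j=0}^{k}\binom{n}{k-j} \le \binom{n}{k}\sum_{j=0}^{k}2^{-j} < 2\binom{n}{k},
\]
where the final step uses that the finite geometric series equals $2 - 2^{-k} < 2$. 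This establishes the claim.

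The step I expect to require the most care is the uniform ratio bound: it is tempting to verify $k/(n-k+1) < 1/2$ only at the endpoint $i=k$ and then silently assume it holds throughout the range, so I would make the monotonicity of $i \mapsto i/(n-i+1)$ explicit to justify that the worst case genuinely occurs at $i = k$. The remaining bookkeeping --- keeping track of strict versus non-strict inequalities across the telescoped chain --- is routine, and in fact yields the strict bound $\sum_{i=0}^{k}\binom{n}{i} < 2\binom{n}{k}$, which is at least as strong as the claimed $\le$.
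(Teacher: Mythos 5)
Your proof is correct and rests on the same key fact as the paper's: under $k < \frac{n+1}{3}$ the ratio $\binom{n}{i}/\binom{n}{i-1} = \frac{n-i+1}{i}$ is at least $2$ for every $1 \le i \le k$, so the coefficients decay at least geometrically below the top term $\binom{n}{k}$. The paper concludes by summing this ratio inequality over $i=1,\dots,k$ and rearranging the resulting relation between partial sums, whereas you telescope it into $\binom{n}{k-j} \le 2^{-j}\binom{n}{k}$ and sum a geometric series; this is only a difference in bookkeeping, and your version has the minor merit of making explicit the monotonicity of $i \mapsto i/(n-i+1)$ that justifies the bound uniformly over the range, a point the paper leaves implicit.
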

\begin{proof}
For $k< \frac{n+1}{3}$, we have
\begin{equation}
\binom{n}{i} = \frac{n-i+1}{i}\binom{n}{i-1} \geq 2\binom{n}{i-1}.
\end{equation}
Furthermore, we have
\begin{equation}
\sum_{i=1}^{k}\binom{n}{i} \geq 2\sum_{i=1}^{k} \binom{n}{i-1} \implies \binom{n}{k} \geq \binom{n}{0} + \sum_{i=0}^{k-1}\binom{n}{i}.
\end{equation}
Adding $\binom{n}{k}$ to both the sides of the inequality, we get
\begin{equation}
2\binom{n}{k} \geq \binom{n}{0} + \sum_{i=0}^{k}\binom{n}{i} \implies \sum_{i=0}^{k}\binom{n}{i} \leq 2 \binom{n}{k}.
\end{equation}
\end{proof}

In general, given an upper bound $\alpha$ on the number of obstacles that may need to be removed, we can safely prune the vertices $|I(v)|>\alpha$, since $|S^{\star}|\leq\alpha$. 
We thus get a reduced graph $G'$. 
Similarly, for the MCR search on $G'$, states $(v, R(v))$ with $|R(v)|>\alpha$ can also be pruned since such states do not affect the solution as $|S^{\star}|\leq\alpha$. 
Furthermore, a vertex could be reached via many paths and hence could produce different states. 
Therefore, a state $(v, R(v))$ can be pruned if during the search the vertex was previously reached by a state $(v, R'(v))$ such that $R(v) \supseteq R'(v)$. 
Thus, given an $\alpha$ and states pruned as argued above, the search generates $O\left(\sum_{i=0}^{\alpha}\binom{n}{i}\right)$ states. 
\begin{theorem}
Instances of MCR for simply connected obstacles in the plane with $|S^{\star}| < \alpha$, where $\alpha = (n+1)/3$, can be solved in polynomial time.
\label{theorem2}
\end{theorem}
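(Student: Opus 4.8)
The plan is to combine the state-count bound established just above the theorem with Lemma~\ref{lemma2}, and then argue that the resulting quantity is polynomial in $n$. Recall that, after fixing a removal budget $\alpha$ and applying the three prunings --- discarding every vertex with $|I(v)|>\alpha$, every state with $|R(v)|>\alpha$, and every state $(v,R(v))$ dominated by an earlier state $(v,R'(v))$ with $R(v)\supseteq R'(v)$ --- the best-first search generates only $O\!\left(\sum_{i=0}^{\alpha}\binom{n}{i}\right)$ states. I would take this bound as the starting point.

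First I would verify that choosing the budget $\alpha=(n+1)/3$ is safe. Since $|S^{\star}|<\alpha$ by hypothesis, no optimal removable set ever has cardinality exceeding $\alpha$, so none is pruned and the search stays complete and optimal. Moreover, because states are expanded in nondecreasing order of $|R(v)|$, the search actually halts once it reaches the level $|S^{\star}|$; I may therefore replace $\alpha$ by the effective budget $k=|S^{\star}|$, which by hypothesis satisfies the condition $k<(n+1)/3$ required by Lemma~\ref{lemma2}.

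Next I would invoke Lemma~\ref{lemma2} with this $k$ to collapse the sum:
\begin{equation}
\sum_{i=0}^{|S^{\star}|}\binom{n}{i}\;\le\;2\binom{n}{|S^{\star}|}.
\end{equation}
The content of this step is that, in the regime $|S^{\star}|<(n+1)/3$, the binomial sum is dominated up to a factor of two by its single largest term, so the entire search costs no more than examining the deepest level. Multiplying by the polynomial per-state work --- forming $R(v)=R(v')\cup I(v)$, running the domination test, and the priority-queue operations over the $O(n^{2})$ faces of the arrangement --- then yields the total running time.

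The step I expect to be the real obstacle, and the one I would scrutinize hardest, is the final passage from $O\!\left(\binom{n}{|S^{\star}|}\right)$ to ``polynomial time.'' This is clean precisely when $|S^{\star}|$ is bounded by a constant, since then $\binom{n}{|S^{\star}|}=O(n^{|S^{\star}|})$ is a polynomial of fixed degree, and Lemma~\ref{lemma2} is exactly what guarantees that the accumulated lower levels do not inflate this leading term by more than a constant factor. I would therefore frame the conclusion around the single dominant binomial coefficient and make the dependence of the polynomial's degree on $|S^{\star}|$ explicit, since the threshold $(n+1)/3$ earns its keep only through the domination of the top level, and not through any claim that $\binom{n}{|S^{\star}|}$ is itself small when $|S^{\star}|$ approaches $n/3$.
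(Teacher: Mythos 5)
Your proposal follows the paper's proof essentially line for line: the same pruned state count $O\bigl(\sum_{i=0}^{\alpha}\binom{n}{i}\bigr)$, the same invocation of Lemma~\ref{lemma2} to collapse the sum to twice its top term, and the same binomial-coefficient endgame. In that sense nothing is missing relative to what the paper does.

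But your closing scrutiny is exactly where you and the paper part ways, and you are the more careful party. The paper's proof reads, in full: ``For any $\alpha < (n+1)/3$, the exact search generates $O(n^{\alpha})$ states\ldots Thus, MCR with $|S^{\star}|< (n+1)/3$ is in $P$.'' This treats $O(n^{\alpha})$ as polynomial without qualification. As you observe, that holds only when $\alpha$ (equivalently $|S^{\star}|$) is a constant independent of $n$; the hypothesis $|S^{\star}| < (n+1)/3$ by itself permits $|S^{\star}| = \Theta(n)$, in which case $\binom{n}{|S^{\star}|}$ is super-polynomial (already $2^{\Theta(n)}$ for $|S^{\star}|$ a constant fraction of $n$), and Lemma~\ref{lemma2} cannot rescue this: it shows only that the accumulated lower levels cost at most a constant factor more than the top level, not that the top level is small. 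So the theorem read literally is not established by the paper's own argument; what the argument does establish is the statement you arrive at --- for each fixed $k < (n+1)/3$, instances with $|S^{\star}| = k$ are solvable with $O(n^{k})$ states and polynomial per-state work, a polynomial whose degree depends on $k$ (an XP-style, slice-wise polynomial bound rather than genuine membership in P). Your decision to frame the conclusion around the single dominant binomial coefficient, with the degree's dependence on $|S^{\star}|$ made explicit, is the defensible version of the claim; the threshold $(n+1)/3$ earns its keep, as you say, only by bounding the overhead of the lower search levels, not by bounding $\binom{n}{|S^{\star}|}$ itself.
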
 
\begin{proof}
For any $\alpha < (n+1)/3$, the exact search generates $O(n^{\alpha})$ states. 
This directly follows from our arguments above on pruning sub-optimal states and using Lemma~\ref{lemma2}. 
Thus, MCR with $|S^{\star}|< (n+1)/3$ is in $P$. 
\end{proof}
\begin{figure}[t!]
	\centering
		\includegraphics[scale=0.5]{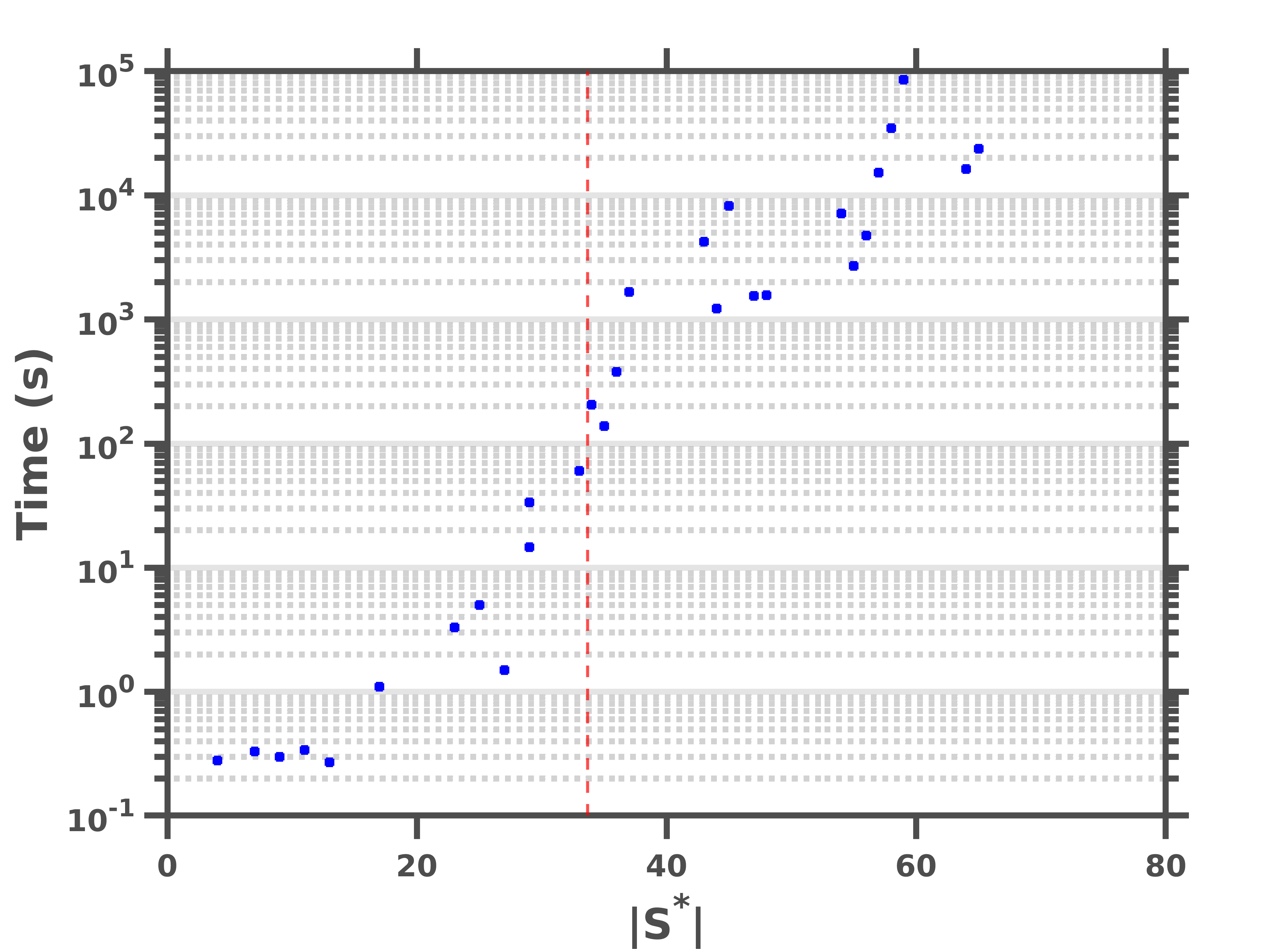}
		\caption{Running times plotted against the MCR solution (cardinality of $S^{\star}$) for different instances of 100 randomly sampled axis-parallel rectangles.}
	\label{fig:random_rectangle}
\end{figure} 
\subsection{Empirical Validation}
In order to validate our claim, we employ the exact search on different instances of MCR, in which the obstacles are axis-parallel rectangles. 
To simulate different MCR instances, we consider a $30 \times 30$ grid, and randomly sample $n$ points with each sampled point corresponding to an obstacle. 
For each sampled point we then randomly sample its length and breadth. 
Finally, for each instance, the start and goal locations are randomly chosen from the grid points. 
The tests are performed on an Intel{\small\textregistered} Core i7-10510U CPU$@$1.80GHz$\times$8 with 16GB RAM under Ubuntu 18.04 LTS.
Fig.~\ref{fig:random_rectangle} shows running times for the exact search on the $30 \times 30$ grid with $100$ obstacles. 
The red vertical line corresponds to $\alpha = (n+1)/3$. 
For $|S^{\star}|>(n+1)/3$, as the distance between the points and the $\alpha$ line increases, the exact search quickly becomes infeasible, requiring several hours to compute the optimal solution. 
It should be noted that the computational performance is greatly affected by the number of obstacle overlaps (obstacle distribution), as we have shown in the previous section. 
This justifies the non-monotonic relationship between MCR solution instances and their corresponding computational times.

\subsection{Computing an Upper Bound for $|S^{\star}|$} 
\begin{figure}[t!]
\centering
\includegraphics[scale=0.3]{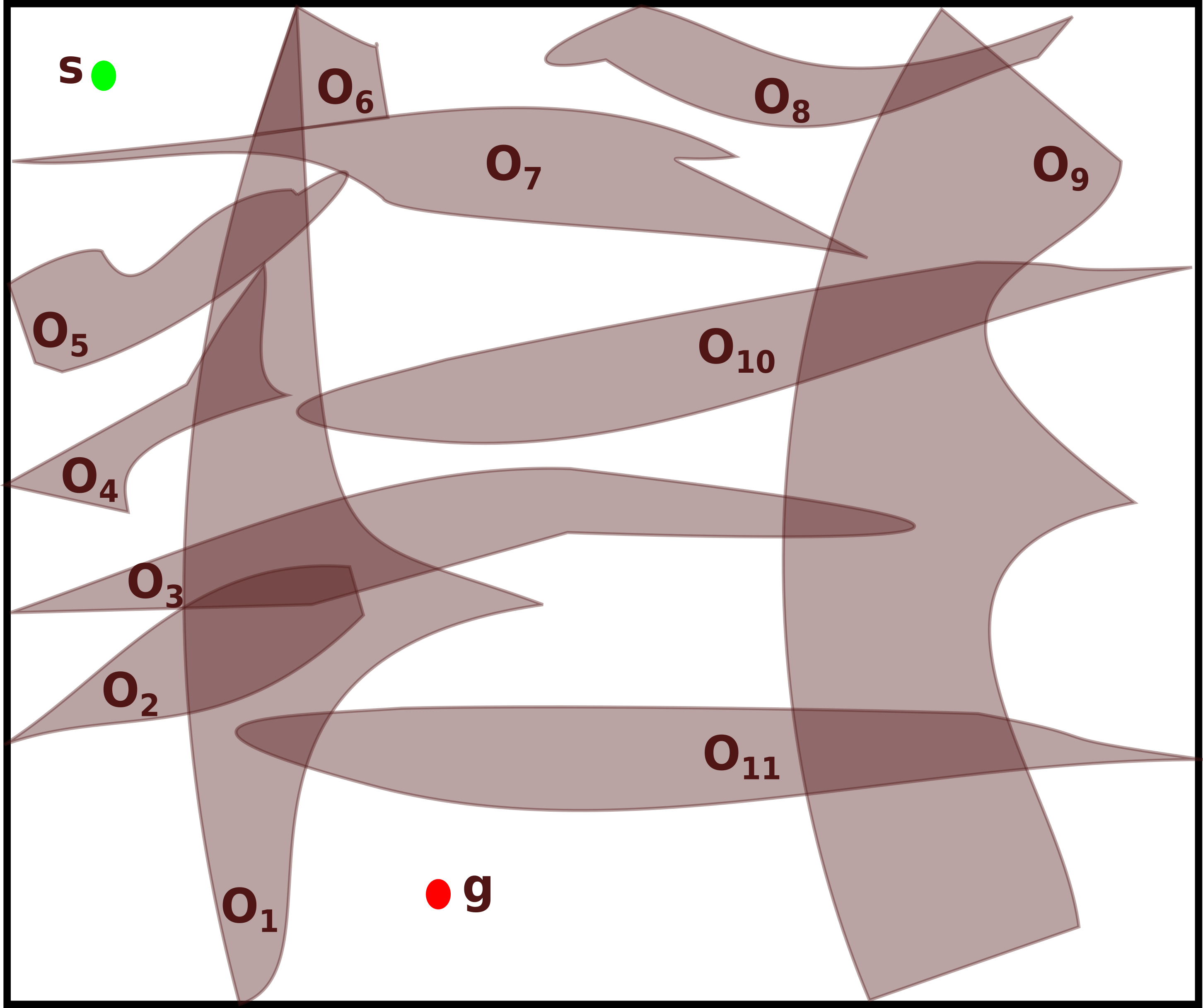}
\caption{
A 6-obstacle solution ($S = \{o_1, o_2, o_3, o_4, o_5, o_7\}$) is returned when a minimum weight path is computed with each vertex assigned a weight equal to its intersection function. 
The optimal solution $S^{\star} =\{o_1, o_3, o_7\}$ is obtained by removing $3$ obstacles.}
\label{fig:weigths}
\end{figure} 
An implication of the result in Theorem~\ref{theorem2} is that if we know that the inequality $|S^{\star}| < (n+1)/3$ holds, then the exact graph search approach can be employed. 
Such knowledge could be obtained by determining a tight upper bound (or one must hope for the existence of an \textit{oracle}). 
One straightforward approach is to solve a weighted graph with each vertex $v$ assigned a weight of $|I(v)|$. 
It is noteworthy that if the colors repeat no more than $p$ times, the minimum weight path (say, using Dijkstra’s algorithm) gives a $p-$approximate solution, where $p = \Theta(n)$. 
Fig.~\ref{fig:weigths} shows an example in which the minimum weight path is $S = \{o_1, o_2, o_3, o_4, o_5, o_7\}$. 
However the optimal MCR solution is $S^{\star} =\{o_1, o_3, o_7\}$. 
An upper bound is also easily obtained by assigning unit weight to all the vertices. 
This approach could achieve an $O(n)$ error and one such instance is displayed in Fig.~\ref{fig:equal}. 

Another natural approach is to employ the greedy search, whereby at each $v$ the minimum $|R(v)|$ subset would be expanded, and the remaining states with $|R'(v)| \geq |R(v)|$ would be pruned, which runs in $O(n)$ time. 
However, the greedy search has worst-case $O(n)$ error.
An example is shown in Fig.~\ref{fig:greedy}.
\begin{figure}[t]
\subfloat[]{\includegraphics[scale=0.43]{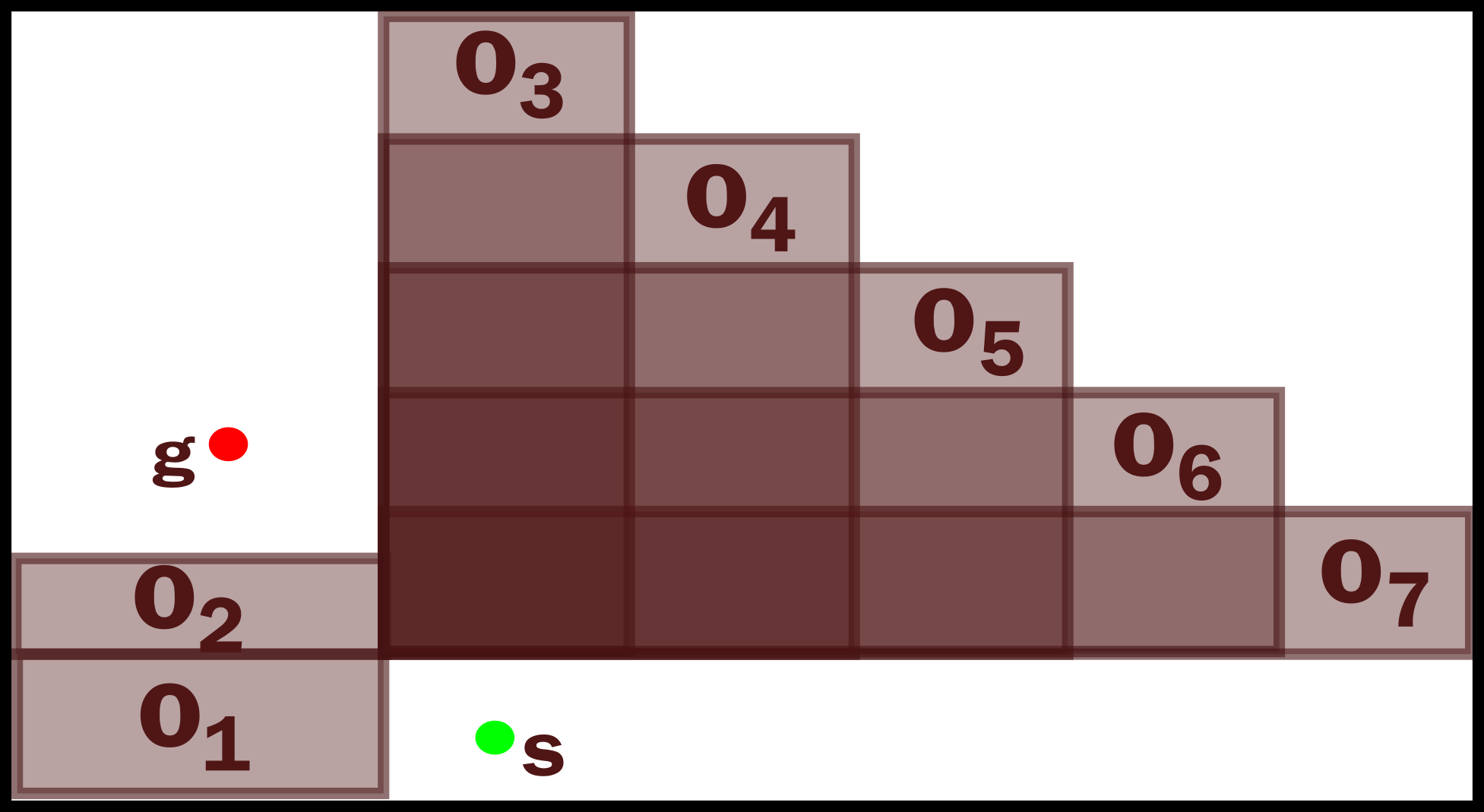}\label{fig:equal}}\hfill
\subfloat[]{\includegraphics[scale=0.5]{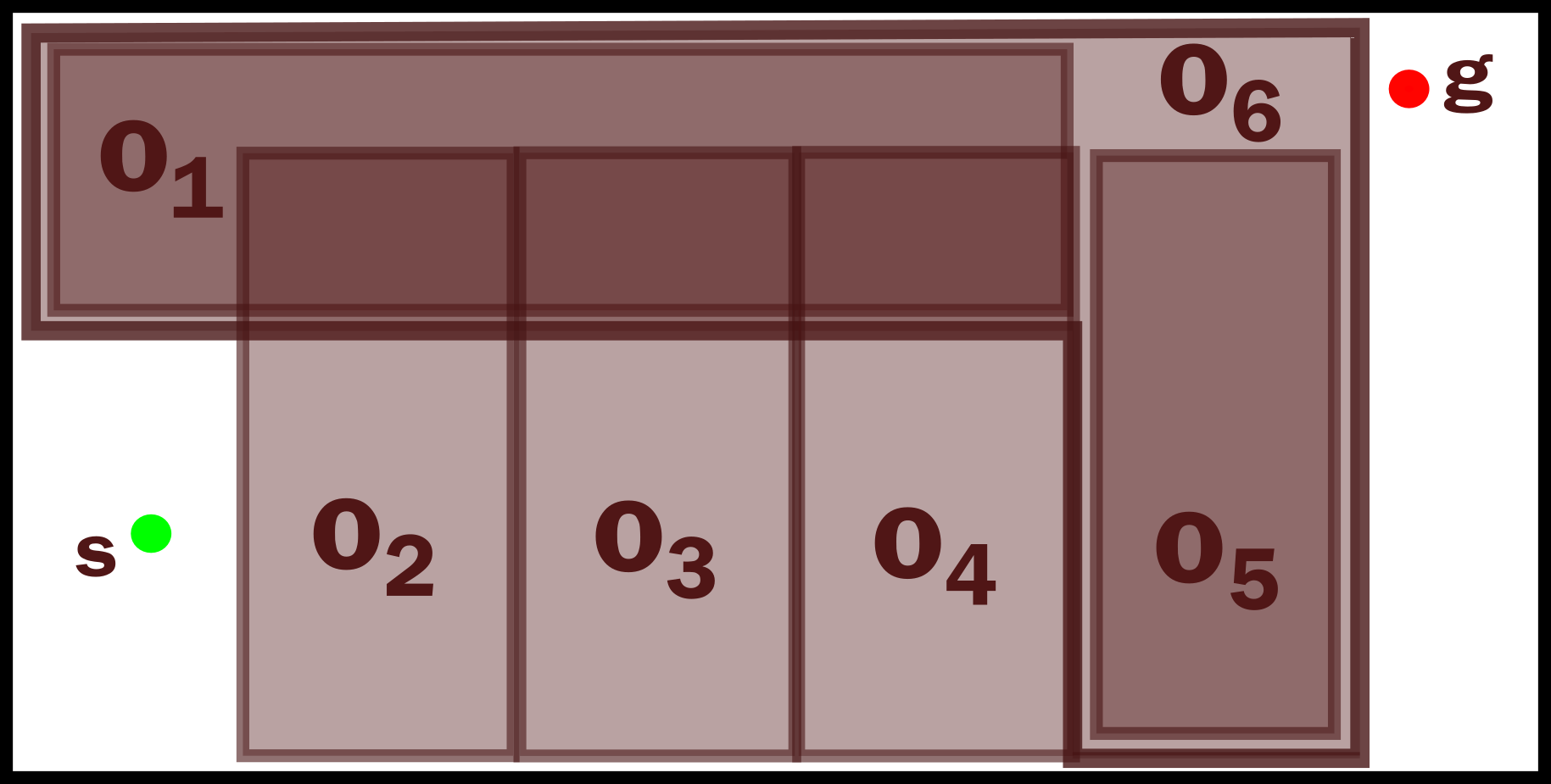}\label{fig:greedy}}\hfill
\caption{
(a) The optimal solution is $S^{\star} = \{o_1, o_2\}$. 
Assigning equal weights to each vertex, we have a sub-optimal path with $S = \{o_3, \ldots, o_7\}$.
(b) An example with five rectangular obstacles $o_1, \ldots, o_5$, and an L-shaped obstacle $o_6$ in the plane. 
The exact search gives $S^{\star} = \{o_1, o_6\}$ but the greedy search returns $S=\{o_2,o_3,o_4,o_5,o_6\}$.
}
\label{fig:shortest}
\end{figure}

Though instances achieving a tight upper bound can be easily synthesized, in general, the approaches discussed above compute conservative upper bounds. 
As seen in Fig.~\ref{fig:shortest}, even when the obstacles are axis-parallel rectangles, an $O(n)$ error is achieved. 
A $1.5$-approximation for unit disks is known due to Chan and Kirkpatrick~\cite{chan2014TCS}.
Do tighter bounds exist for different sub-classes? 
Do polynomial-time algorithms exist for MCR that give an $\epsilon$-approximation ($\epsilon>0$ and $\epsilon \to 0$)? 
We leave them as open theoretical questions.
\section{Conclusion}
We have shown that MCR is NP-hard for instances in which no obstacle intersects more than a constant number of other obstacles. 
This answers an open question posed in~\cite{hauser2014IJRR} on whether MCR is in $P$ on problems in which each obstacle is connected and intersects no more than $k$ obstacles. 
The tractability of such instances is also questioned in~\cite{erickson2013AAAI} wherein the authors state without proof that the problem is trivial when the obstacles are allowed to intersect only pairwise. 
Eiben \textit{et al.}~\cite{eiben2018AAAI} showed that instances in which no more than two obstacles overlap at the same point cannot be solved in time $2^{o(\sqrt{N})}$, where $N = O(n^2)$ and $n$ denotes the number of obstacles in the plane. 
Instances restricted to the pairwise intersection are a sub-class of the 2-obstacle overlap MCR. 
This suggests a future line of study: is MCR in $P$ for instances with only $k-$wise allowed intersections, that is only groups of $k$ obstacles allowed to intersect, and $k$ held fixed? 

We have also shown that for $n$ simply connected obstacles in the plane, instances of MCR with $|S^{\star}|< (n+1)/3$ can be solved in polynomial time. 
A consequence of these results is that the exact graph search could be employed if it is known beforehand that $|S^{\star}|< (n+1)/3$ holds. 
However, algorithms that compute tight bounds are currently unknown. 
The existence of polynomial time algorithms for MCR computing tight upper bounds is left as an open theoretical question.
\bibliographystyle{splncs03.bst}
\bibliography{/home/antony/research_genoa/References/References}
\end{document}